
\documentclass[twoside,leqno,twocolumn]{article}

\usepackage[letterpaper]{geometry}

\usepackage{ltexpprt}
\usepackage[inline]{enumitem}
\usepackage[utf8]{inputenc} 
\usepackage[T1]{fontenc}    
\usepackage{hyperref}       
\usepackage{url}            
\usepackage{booktabs}       
\usepackage{amsfonts}       
\usepackage{nicefrac}       
\usepackage{microtype}      

\usepackage{hyperref}
\usepackage{url}
\usepackage{float}
\usepackage{graphicx}
\usepackage{amssymb}
\usepackage{comment}
\usepackage{enumitem}
\usepackage{booktabs}
\usepackage{xspace}
\usepackage[dvipsnames]{xcolor}
\usepackage{subcaption}
\usepackage{threeparttable}


\usepackage{amsmath,amsfonts,bm}




\def\cA{{\mathcal{A}}}







\def\eqref#1{equation~\ref{#1}}









\def\1{\bm{1}}








\def\va{{\bm{a}}}

\def\vh{{\bm{h}}}


\def\mA{{\bm{A}}}

\def\mE{{\bm{E}}}

\def\mI{{\bm{I}}}

\def\mW{{\bm{W}}}
\def\mX{{\bm{X}}}
\def\mY{{\bm{Y}}}

\DeclareMathAlphabet{\mathsfit}{\encodingdefault}{\sfdefault}{m}{sl}
\SetMathAlphabet{\mathsfit}{bold}{\encodingdefault}{\sfdefault}{bx}{n}

\def\cA{{\mathcal{A}}}

\def\cG{{\mathcal{G}}}

\def\cO{{\mathcal{O}}}
\def\cP{{\mathcal{P}}}

\def\cX{{\mathcal{X}}}
\def\cY{{\mathcal{Y}}}










\newcommand{\R}{\mathbb{R}}



\DeclareMathOperator*{\argmax}{arg\,max}

\newcommand{\prelim}[1]{\noindent{\bf (#1)}}

\newcommand{\textBF}[1]{\fontseries{b}\selectfont #1}

\usepackage{cleveref}

\begin{document}

\title{Learning over Families of Sets - Hypergraph Representation Learning for Higher Order Tasks}

\author{
Balasubramaniam Srinivasan \\ Purdue University \\ bsriniv@purdue.edu \and Da Zheng \\ Amazon Web Services \\ dzzhen@amazon.com	 \and George Karypis \\ Amazon Web Services\\ gkarypis@amazon.com
}

\date{}

\maketitle


\fancyfoot[R]{\scriptsize{Copyright \textcopyright\ 2021 by SIAM\\
Unauthorized reproduction of this article is prohibited}}

\begin{abstract}
Graph representation learning has made major strides over the past decade.
However, in many relational domains, the input data are not suited for simple graph representations as the relationships between entities go beyond pairwise interactions.
In such cases, the relationships in the data are better represented as hyperedges (set of entities) of a non-uniform hypergraph. 
While there have been works on principled methods for learning representations of nodes of a hypergraph, these approaches are limited in their applicability to tasks on non-uniform hypergraphs (hyperedges with different cardinalities).
In this work,
we exploit the incidence structure to develop a hypergraph neural network to learn provably expressive representations of variable sized hyperedges which preserve local-isomorphism in the line graph of the hypergraph, while also being invariant to permutations of its constituent vertices.
Specifically, for a given vertex set, we  propose frameworks for (1) hyperedge classification and (2) variable sized expansion of partially observed hyperedges which captures the higher order interactions among vertices and hyperedges.
We evaluate performance on multiple real-world hypergraph datasets and demonstrate consistent, significant improvement in accuracy, over state-of-the-art models.
\end{abstract}


\section{Introduction}
\label{sec:into}

Deep learning on graphs has been a rapidly evolving field due to its widespread applications in domains such as e-commerce, personalization, fraud \& abuse, life sciences, and social network analysis.
However, graphs can only capture interactions involving pairs of entities whereas in many of the aforementioned domains any number of entities can participate in a single interaction. 
For example, more than two substances can interact at a specific instance to form a new compound, study groups can contain more that two students, recipes contain multiple ingredients, shoppers purchase multiple items together, etc.
Graphs, therefore, can be an over simplified depiction of the input data (which may result in loss of significant information).
Hypergraphs, \cite{berge1984hypergraphs} (see \Cref{fig:hypergraph}(a) for example), which serve as the natural extension of dyadic graphs, form the obvious solution. 

\begin{figure*}[ht!!!]
\vspace{-0.7in}
\begin{center}
\centerline{\includegraphics[height=2.5in, width=5.5in]{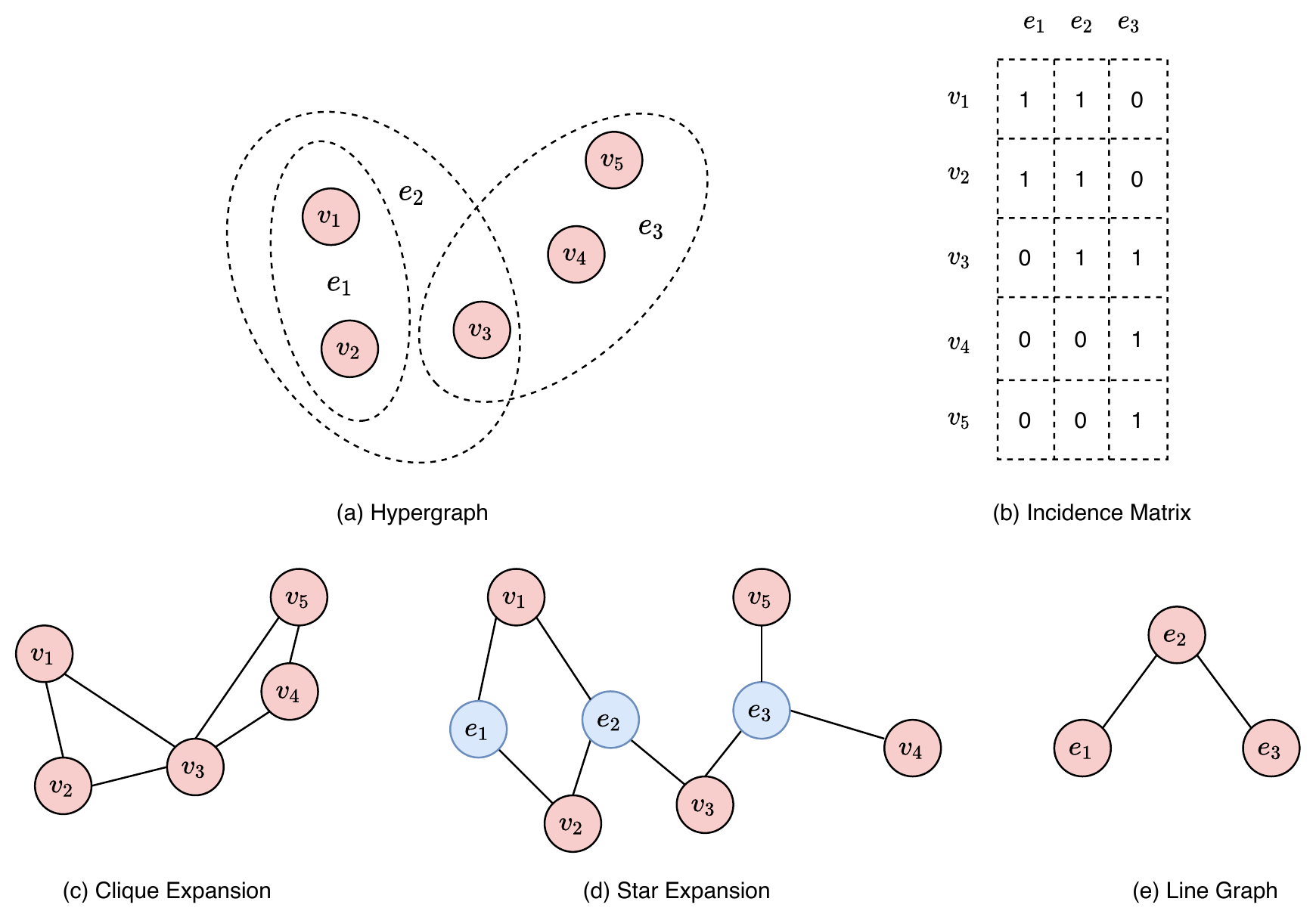}}
\caption{\small A Hypergraph(a) with 5 nodes $v_1, v_2, \ldots v_5$ and 3 hyperedges $e_1=\{v_1,v_2\}, e_2=\{v_1,v_2,v_3\}, e_3=\{v_3,v_4,v_5\}$ , its incidence matrix(b), its clique expansion (c), its star expansion (d) and its line graph(e)}
\label{fig:hypergraph}
\end{center}
\vspace{-0.4in}
\end{figure*}

Due to the ubiquitous nature of hypergraphs, learning on hypergraphs has been studied for more than a decade \cite{agarwal2006higher, zhou2007learning}.
Early works on learning on hypergraphs employed random walk procedures~\cite{lu2011high, bellaachia2013random, chitra2019random} and the vast majority of them were limited to hypergraphs whose hyperedges have the same cardinality ($k$-uniform hypergraphs). 
More recently, with the growing popularity and success of message passing graph neural networks \cite{kipf2016semi, hamilton2017inductive}, message passing hypergraph neural networks learning frameworks have been proposed \cite{feng2019hypergraph, bai2019hypergraph, yadati2019hypergcn, zhang2019hyper, yang2020hypergraph}.
These works rely on constructing the clique expansion (\Cref{fig:hypergraph}(c)), star expansions (\Cref{fig:hypergraph}(d)), or other expansions of the hypergraph that preserve partial information.
Subsequently, node representations are learned using GNN's on the graph constructed as a proxy of the hypergraph. 
These strategies are insufficient as either (1) there does not exist a bijective transformation between a hypergraph and the constructed clique expansion (loss of information); (2) they do not accurately model the underlying dependency between a hyperedge and its constituent vertices (for example, a hyperedge may cease to exist if one of the nodes were deleted); (3) they do not directly model the interactions between different hyperedges.
The primary goal of this work is to address these issues and to build models which better represent hypergraphs.

Corresponding to the adjacency matrix representation of the edge set of a graph, a hypergraph is commonly represented as an incidence matrix (\Cref{fig:hypergraph}(b)), in which a row is a vertex, a column is a hyperedge and an entry in the matrix is 1 if the vertex belongs to the hyperedge.
In this work, we directly seek to exploit the incidence structure of the hypergraph to learn representations of nodes and hyperedges.
Specifically, for a given partially observed hypergraph, we synchronously learn vertex and hyperedge representations that simultaneously take into consideration both the line graph \Cref{fig:hypergraph}(e) and the set of hyperedges that a vertex belongs to in order to learn provably expressive representations.
The jointly learned vertex and hyperedge representations are then used to tackle higher-order tasks such as expansion of partially observed hyperedges and classification of unobserved hyperedges. 

While the task of hyperedge classification has been studied before, set expansion for relational data has largely been unexplored.
For example, given a partial set of substances which are constituents of a single drug, hyperedge expansion entails completing the set of all constituents of the drug while having access to composition to multiple other drugs.
A more detailed example for each of these tasks is presented in the Appendix - \Cref{subsection:appendix}.
For the hyperedge expansion task, we propose a GAN framework \cite{goodfellow2014generative} to learn a probability distribution over the vertex power set (conditioned on a partially observed hyperedge), which maximizes the point-wise mutual information between a partially observed hyperedge and other disjoint vertex subsets in the vertex power set.

{\em Our  Contributions can be summarized as:}
(1) Propose a hypergraph neural network which exploits the incidence structure and hence works on real world sparse hypergraphs which have hyperedges of different cardinalities.
(2) Provide provably expressive representations of vertices and hyperedges, as well as that of the complete hypergraph which preserves properties of hypergraph isomorphism.
(3) Introduce a new task on hypergraphs -- namely the variable sized hyperedge expansion and also perform variable sized hyperedge classification.
Furthermore, we demonstrate improved performance over existing baselines on majority of the hypergraph datasets using our proposed model.

\section{Preliminaries}
\label{sec:prelims}


In our notation henceforth, we shall use capital case characters (e.g., $A$) to denote a set or a hypergraph, bold capital case characters (e.g., $\mA$) to denote a matrix, and capital characters with a right arrow over it (e.g., $\overrightarrow{A}$) to denote a sequence with a predefined ordering of its elements.
We shall use lower characters (e.g., $a$) to denote the element of a set and bold lower case characters (e.g., $\va$) to denote vectors.
Moreover, we shall denote the $i$-th row of a matrix $\mA$ with $\mA_{i\cdot}$, the $j$-th column of the matrix with $\mA_{\cdot j}$, and use $A_m$ to denote a subset of the set $A$ of size $m$ i.e., $A_m \subseteq A; \, |A_m| = m$.

\prelim{Hypergraph}
Let $H=(V,E, \mX, \mE)$ denote a hypergraph $H$ with a finite vertex set
$V=\{v_1, \ldots , v_n\}$, corresponding vertex features
$\mX \in \R^{n \times d}; \, d>0$, a finite hyperedge set
$E = \{e_1, \ldots, e_m\}$, where $E \subseteq P^*(V) \setminus \{\emptyset\}$
and $\bigcup\limits_{i=1}^{m}e_i=V$, where $P^*(V)$ denotes the power set on
the vertices, the corresponding hyperedge features $\mE \in \R^{m \times d}; \, d > 0$.
We use $E(v)$ (termed star of a vertex) to denote the hyperedges incident on a vertex $v$ and use $S_H$, a set of tuples, to denote the family of stars where $S_H = \{(v, E(v)): \forall v \in V\}$
called the family of stars of H.
When explicit vertex and hyperedge features and weights are unavailable, we will consider  $\mX=\mathbf{1_n}\mathbf{1_n}^T$, $\mE=\mathbf{1_m}\mathbf{1_m}^T$, where $\mathbf{1}$ represents a $n \times 1$ or $m \times 1$ vector of ones respectively.
The vertex and edge set $V,E$ of a hypergraph can equivalently be represented with an incidence matrix $\mI \in \{0,1\}^{|V| \times |E|}$, where $\mI_{ij}=1$ if $v_i \in e_j$ and $\mI_{ij}=0$ otherwise.
Isomorphic hypergraphs either have the same incidence matrix or a row/column/row and column permutation of the incidence matrix i.e., the matrix $\mI$ is separately exchangeable.
We use $L_H$ to denote the line graph (\Cref{fig:hypergraph}(e)) of the hypergraph, use $H^\star$ to denote the dual of a hypergraph.
Additionally, we define a function $LG_{H}$ , a multi-valued function termed line hypergraph of a hypergraph - which generalizes the concepts line graph and the dual of a hypergraph and defines the spectrum of values which lies between them.
For the scope of this work, we limit ourselves for $LG_{H}$ to be a dual valued function - using only the two extremes, such that $LG_{H}(0) = L_H$ and  $LG_{H}(1) = H^\star$.

Also, we use $\Sigma_{n, m}$ to denote the set of all possible attributed hypergraphs $H$ with $n$ nodes and $m$ hyperedges. More formally, $\Sigma_{n, m}$ is the set of all tuples $(V,E,\mX, \mE)$  --- for vertex node set size $n$ and hyperedge set size $m$.

\prelim{1-Weisfeiler-Leman(1-WL) Algorithm}
Let $G=(V,E)$ be a graph, with a finite vertex set $V$ and let $s:V \to \Delta$ be a node coloring function with an arbitrary co-domain $\Delta$ and $s(v), v \in V$ denote the color of a node in the graph.
Correspondingly, we say a labeled graph $(G, s)$ is a graph $G$ with a complete node coloring $s:V \to \Delta$.
The 1-WL Algorithm \cite{babai1980random} can then be described as follows:
let $(G, s)$ be a labeled graph and in each iteration, $t \geq 0,$ the 1-WL computes a node coloring $c_{s}^{(t)}: V \to \Delta,$ which depends on the coloring from the previous iteration.
The coloring of a node in iteration $t>0$ is then computed as
$c_{s}^{(t)}(v)= \text{HASH}\left(\left(c_{s}^{(t-1)}(v),\left\{c_{s}^{(t-1)}(u) \mid u \in N(v)\right\}\right)\right)$
where HASH is bijective map between the vertex set and $\Delta$, and $N(v)$ denotes the 1-hop neighborhood of node $v$ in the graph.
The 1-WL algorithm terminates if the number of colors across two iterations does not change, i.e., the cardinalities of the images of $c_{s}^{(t-1)}$ and $c_{s}^{(t)}$ are equal.
The 1-WL isomorphism test, is an isomorphism test, where the 1-WL algorithm is run in parallel on two graphs $G_1, G_2$ and the two graphs are deemed non-isomorphic if a different number of nodes are colored as $\kappa$ in $\Delta$.

\prelim{Graph Neural Networks (GNNs)}
For a graph $G=(V,E,\mX)$, modern GNNs use the edge connectivity and node features $\mX$ to learn a representation vector of a node, $h_v$, or the entire graph, $h_G$.
They employ a neighborhood aggregation strategy, where  the representation of a node is iteratively updated by an aggregation of the representations of its neighbors. 
Multiple layers are employed to capture the k-hop neighborhood of a node.
The update equation of a GNN layer can be written as
{\small
$$ h_v^{k} = \text{COMBINE}(h_v^{k-1},\text{AGGREGATE}^{k}(\{h_u^{k-1}: u \in N(v)\}))$$
 }
where $h_v^k$ is the representation of node $v$ after k layers and $N(v)$ is the 1-hop neighborhood of $v$ in the graph.
\cite{morris2019weisfeiler, xu2018powerful} showed that message passing GNNs are no more powerful than the 1-WL algorithm.

\prelim{Finite Symmetric Group $\mathbb{S}_n$}
A finite symmetric group $\mathbb{S}_m$ is a discrete group $\cG$ defined over a finite set of size $m$ symbols (w.l.o.g. we shall consider the set $\{1,2,\ldots, m\}$) and consists of all the permutation operations that can be performed on the $m$ symbols.
Since the total number of such permutation operations is $m!$ the order of $\mathbb{S}_m$ is m!.

\prelim{Group Action (left action)}
If $\cA$ is a set and $\cG$ is a group, then $\cA$ is a $\cG$-set if there is a function $\phi: \cG \times \cA \to \cA$, denoted by $\phi(g,a) \mapsto ga$, such that:
\begin{enumerate}[label=(\roman*), noitemsep]
    \item $1a = a$ for all $a \in \cA$, where $1$ is the identity element of the group $\cG$
    \item $g(ha) = (gh)(a)$ for all $g,h \in \cG$ and $a \in \cA$ 
\end{enumerate}

\prelim{Orbit}
Given a group $\cG$ acting on a set $\cA$, the orbit of an element $a \in \cA$ is the set of elements in $\cA$ to which $a$ can be moved by the elements of $\cG$. 
The orbit of $a$ is denoted by $\cO(a) = \{g \cdot a | g \in \cG\} \subset \cA$.

\prelim{Vertex Permutation action $\pi_V$}
A vertex permutation action $\pi \in \mathbb{S}_k$ is the application of a left action $\phi: \mathbb{S}_k \times V_k \to V_k$ with the element $\pi$ on a sorted sequence of $k$ vertices represented as $\overrightarrow{V_k} = (v_1, \ldots,v_k)$ of a hypergraph to output a corresponding permuted sequence of vertices i.e., $\phi(\pi,\overrightarrow{V_k}) = \overrightarrow{V_{k_{\pi}}} = (v_{\pi(1)}, \ldots, v_{\pi(k)})$.
A permutation action $\pi \in \mathbb{S}_n$ can also act on any vector, matrix, or tensor defined over the nodes $V$, e.g., $(\mX_{i\cdot})_{i \in V}$, and output an equivalent vector, matrix, or tensor with the order of the nodes permuted e.g., $(\mX_{\pi(i)\cdot})_{\pi(i) \in V}$. 

\prelim{Hyperedge Permutation Action $\pi_E$}
A hyperedge permutation action $\pi \in \mathbb{S}_k$ is the application of a left action $\psi: \mathbb{S}_k \times E_k \to E_k$ with the element $\pi$ on a sorted sequence of $m$ hyperedges represented as $\overrightarrow{E_k} = (e_1, \ldots,e_k)$ of a hypergraph to output a corresponding permuted sequence of hyperedges i.e., $\psi(\pi,\overrightarrow{E_k}) = \overrightarrow{E_{k_{\pi}}} = (e_{\pi(1)}, \ldots, e_{\pi(k)})$.
A permutation action $\pi \in \mathbb{S}_m$ can also act on any vector, matrix, or tensor defined over the hyperedges $E$, e.g., $(\mE_{i\cdot})_{i \in E}$, and output an equivalent vector, matrix, or tensor with the order of the hyperedges permuted e.g., $(\mE_{\pi(i)\cdot})_{\pi(i) \in E}$.
It is crucial to note that a vertex permutation action can be simultaneously performed along with the hyperedge permutation.
We represent a joint permutation on the entire edge set $E$ as $\pi_2(\pi_1(E))$, and for a hyperedge $e \in E$ as $\pi_2(\pi_1(e))$ where $\pi_i \in \mathbb{S}_n, \pi_2 \in \mathbb{S}_m$

\prelim{Node Equivalence Class/ Node Isomorphism}
The equivalence classes of vertices  $v \in V$ of a hypergraph $H$ under the action of automorphisms between the vertices are called vertex equivalence classes or vertex orbits.
If two vertices $v_1,v_2 \in V$ are in the same vertex orbit, we say they are node isomorphic and are denoted by $v_1 \cong v_2$.

\prelim{Hyperedge Orbit/ Isomorphism}
The equivalence classes of non empty subsets of vertices  $e \in \cP^\star(V) \setminus \emptyset; \, e \in E$ of a hypergraph $G$ under the action of automorphisms between the subsets are called hyperedge orbits.
If two proper subsets $e_1,e_2 \in P^*(V) \setminus \emptyset $ are in the same hyperedge orbit, we say they are hyperedge isomorphic and are denoted by $e_1 \cong e_2$.

\prelim{Hypergraph Orbit and Isomorphism}
The hypergraph orbit of a hypergraph $H$, given by 
the application of the elements $\pi$ of the finite symmetry group $\mathbb{S}_n$ on the vertex set $V$/ $\mathbb{S}_m$ on the edge set $E$/ or any combination of the two and appropriately modifying the 
associated matrices $\mX, \mE, \mI$ of the hypergraph.
Two hypergraphs $H_1$ and $H_2$ are said to be isomorphic (equivalent) denoted by $H_1 \cong H_2$ iff there exists either a vertex permutation action or hyperedge permutation action or both such that $H_1 = \pi \cdot H_2$.
The hypergraph orbits are then the equivalence classes under this relation; two hypergraphs $H_1$ and $H_2$ are equivalent iff their hypergraph orbits are the same.

\prelim{$\mathcal{G}$-invariant functions}
A function $\phi$ acting on a hypergraph $H$ given by $\phi:\Sigma_{n,m} \to \R^{\bm{\circ}}$ is $\mathcal{G}$-invariant whenever it is invariant to any vertex permutation/ edge permutation action $\pi \in \mathbb{S}_{\cdot}$ in the $\Sigma_{n,m}$ symmetric space i.e., $\phi(\pi \cdot H) = \phi(H)$ and all isomorphic hypergraphs obtain the same representation.
Similarly, a function $\rho: P^*(V) \setminus \emptyset \to R^{\cdot} $acting on a hyperedge for a given hypergraph $H$, is said to be $\cG$-invariant iff all isomorphic hyperedges obtain the same representation.

\subsection{Problem Setup}
\paragraph{Partially observed hyperedge expansion}
Consider a hypergraph $H=(V,E',\mX,\mE)$ where  a small fraction of hyperedges in the hyperedge set are partially observed and let $E$ be the completely observed hyperedge set.
A partially observed hyperedge implies $\exists e'_i \in E'$, $\exists v_j  \in V, v_j \not\in e'_i$ but $v_j \in e_i, \, e_i \in E$, where $e_i$ is the corresponding completely observed hyperedge of $e'_i$ 
The task here is, given a partial hyperedge $e' \in E', e' \not\in E$ but $e' \subset e, e \in E$, to complete $e'$ with vertices from $V$ so that after hyperedge expansion $e'=e$.

\paragraph{Unobserved hyperedge classification}
Consider a hypergraph $H=(V,E',\mX,\mE')$ with an incompletely observed hyperedge set $E'$ and let $E$ be the corresponding completely observed hyperedge set with $E' \subset E$.
An incomplete hyperedge set implies $\exists  e \in E; \, e \not\in E'$ where  $|E'| < |E| = m$.
It is important to note that in this case, if a certain hyperedge is present in $E'$, then the hyperedge is not missing any vertices in the observed hyperedges.
The task here is, for a given hypergraph $H$, to predict whether a new hyperedge $e$ was present but unobserved in the noisy hyperedge set i.e., $e \not\in E'$ but $e \in E$.
\section{Learning Framework and Theory}
\label{sec:learning_theory}

Previous hypergraph neural networks \cite{feng2019hypergraph, yadati2019hypergcn, bai2019hypergraph}, employ a proxy graph to learn vertex representations for every vertex $v \in V$, by aggregating information over its neighborhood.
Hyperedge representations (or alternatively, hypergraph) are then obtained, when necessary, by using a pooling operation (e.g. sum, max, mean, etc)  over the vertices in the hyperedge (vertex set of the hypergraph). 
However, such a strategy, fails to (1) preserve properties of equivalence classes of hyperedges/hypergraphs and (2) capture the implicit higher order interactions between the nodes/ hyperedges, and fails on higher order tasks as shown by \cite{Srinivasan2020On}.

To alleviate these issues, in this work, we use a message passing framework on the incidence graph representation of the observed hypergraph, which synchronously updates the node and observed hyperedge embeddings as follows:

\begin{equation}
\small
\label{eq:edge_update}
\begin{split}
       \vh^k_e = \sigma(\mW^k_E \cdot (\vh^{(k-1)}_e \otimes f^k(\{(\vh^{(k-1)}_v \otimes p^k(\{\vh_{e'}^{(k-1)} \\
       : e' \ni v\}))   : \forall v \in e \, \text{where}\, v \in V, e,e' \in E\}) ) ) 
\end{split}
\end{equation}
\begin{equation}
   \small
   \label{eq:vertex_update}
   \begin{split}
   \vh^k_v = \sigma(\mW^k_V \cdot (\vh^{(k-1)}_v \otimes g^k(\{\vh^{(k-1)}_e \otimes q^k(\{\vh_{v'}^{(k-1)} \\
   : v' \in e\}))  : \forall e \ni v\ \, \text{where}\, v,v' \in V, e \in E\})))
   \end{split}
\end{equation}
where, $\otimes$ denotes vector concatenation, $f^k,g^k, p^k, q^k$ are injective set functions (constructed via \cite{zaheer2017deep, murphy2018janossy}) in the $k^{\text{th}}$ layer, $\vh^k_e, \vh^k_v$ are the vector representations of the hyperedge and vertices after $k$ layers, $\mW^k_V, \mW^k_E$ are learnable weight matrices  and $\sigma$ is an element-wise activation function.
We use $K$ (in practice, $K$=2) to denote the total number of convolutional layers used.
From \Cref{eq:vertex_update} it is clear to see that a vertex  not only receives messages from the hyperedges it belongs too, but also from neighboring vertices in the clique expansion.
Similarly, from  \Cref{eq:edge_update}, a hyperedge receives messages from its constituent vertices as well as neighboring hyperedges in the line graph of the hypergraph.

However, the above equations, standalone do not present a framework to learn representations of unobserved hyperedges for downstream tasks.
In order to do this, post the convolutional layers, the representation of any hyperedge (observed or unobserved) are obtained using a function $\Gamma : P^\star (V) \times \Sigma_{n,m} \to \R^d $ as:
\begin{equation}
\small
\label{eq:unobserved_struct}
\begin{split}
    \Gamma(e',H; \Theta) = \phi(\{h_{v_{i}}^K : v_i \in e'\}) \otimes  \rho(\{\vh_{e}^{K}: e \ni v_i \, \forall v_i \in e'\}) \,\\ \text{where}\, v_i \in V, e \in E'
\end{split}
\end{equation}
where $\phi, \rho$ are injective set, multiset functions respectively, and $\Theta$ denotes the model parameters of the entire hypergraph neural network (convolutional layers, set functions).
Correspondingly, the representation of the complete hypergraph is  obtained using a function $\Gamma : \Sigma_{n,m} \to \R^d $ as:
\begin{equation}
\label{eq:separate}
  \Gamma(H; \Theta) = \phi(\{h_v^K\,:\, v \in V\}) \otimes \rho(\{h_e^K\,:\,e \in E\}))
\end{equation}

\subsection{Theory}
In what follows, we list the properties of the vertex/ hyperedge representations.
All proofs are presented in the Supplementary Material.
\label{sec_mini:theory}
\begin{property}[Vertex Representations]
The representation of a vertex $v \in V$ in a hypergraph $H$ learnt using $\Cref{eq:vertex_update}$ is a $\cG$-invariant representation $\Phi(v, V, E, \mX, \mE)$ where $\Phi:V \times \Sigma_{n,m} \to \R^d, d\geq 1$ such that  $\Phi(v, V, E, \mX, \mE)$ = $\Phi((\pi_1(v), \pi_1(V), \pi_2(\pi_1(E)), \pi_1(\mX), \pi_2(\pi_1(\mE)))$ $\forall \pi_1 \forall \pi_2$ where $ \pi_1 \in \mathbb{S}_n \, \text{and} \, \pi_2 \in \mathbb{S}_m$.  
Moreover, two vertices $v_1, v_2$ which belong to the same vertex equivalence class i.e. $v_1 \cong v_2$ obtain the same representation.
\end{property}
%
\begin{property}[Hyperedge Representations]
The representation of a hyperedge $e \in E$ in a hypergraph $H$ learnt using $\Cref{eq:edge_update}$ is a $\cG$-invariant representation $\Phi(e, V, E, \mX, \mE)$ where $\Phi:P^\star(V) \times \Sigma_{n,m} \to \R^d, d\geq 1$ such that  $\Phi(e, V, E, \mX, \mE)$ = $\Phi((\pi_2(\pi_1(e)), \pi_1(V), \pi_2(\pi_1(E)), \pi_1(\mX), \pi_2(\pi_1(\mE)))$ $\forall \pi_1 \forall \pi_2$ where $ \pi_1 \in \mathbb{S}_n \, \text{and} \, \pi_2 \in \mathbb{S}_m$
Moreover, two hyperedges $e_1, e_2$ which belong to the same hyperedge equivalence class i.e. $e_1 \cong e_2$ obtain the same representation.
\end{property}
%
%
%
%
Next, we restate a theorem from \cite{tyshkevich1996line} which provides a means to deterministically distinguish non isomorphic hypergraphs.
Subsequently, we characterize the expressivity of our model to distinguish non-isomorphic hypergraphs.

\begin{theorem}[\cite{tyshkevich1996line}]
Let $H_1, H_2$ be hypergraphs without isolated vertices whose line hypergraphs ${LG}_{H_1}, {LG}_{H_2}$ are isomorphic. 
Then $H_{1} \cong H_{2}$ if and only if there exists a bijection $\beta: V {LG}_{H_{1}} \rightarrow V {LG}_{H_{2}}$ such that $\beta\left(S_{H_{1}}\right)=S_{H_{2}},$ where $S_{ H_{i}}$ is the family of stars of the hypergraph $H_{i}$
\end{theorem}
\begin{theorem}
Let $H_1$, $H_2$ be two non isomorphic hypergraphs with finite vertex and hyperedge sets and no isolated vertices.
If the Weisfeiler-Lehman test of isomorphism decides their line graphs $L_{H_1}, L_{H_2}$ or the star expansions of their duals  $H_1^\star, H_2^\star$ to be not isomorphic 
then there exists a function $\Gamma: \Sigma_{n,m} \to \R^d$ (via \Cref{eq:separate}) and parameters $\Theta$ that maps the hypergraphs $H_1, H_2$ to different representations.
\end{theorem}
%
%
%
%
We now, extend this to the expressivity of the hyperedge representations and then show that the property of separate exchangeability \cite{aldous1981representations} of the incidence matrix is preserved by the hypergraph representation.
\begin{corollary}
There exists a function  $\Gamma : P^\star (V) \times \Sigma_{n,m} \to \R^d $ (via \Cref{eq:unobserved_struct}) and parameters $\Theta$ that maps two non-isomorphic hyperedges $e_1, e_2$ to different representations.
\end{corollary}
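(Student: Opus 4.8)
The plan is to reduce this corollary to the hypergraph-level result just established (the preceding theorem, built on \Cref{eq:separate}) by treating each hyperedge as a distinguished node of the surrounding incidence structure. First I would observe that $\Gamma(e,H;\Theta)$ in \Cref{eq:unobserved_struct} is, by construction, an injective function of exactly two multisets: the final-layer vertex embeddings $\{h_{v}^{K}:v\in e\}$ encoding the content of $e$, and the final-layer hyperedge embeddings $\{h_{e'}^{K}: e'\ni v,\ v\in e\}$ of the hyperedges incident to the vertices of $e$, i.e.\ its neighbors in the line graph $L_H$. Since $\phi,\rho$ are injective set/multiset maps and $\otimes$ is concatenation, the equality $\Gamma(e_1,\cdot)=\Gamma(e_2,\cdot)$ forces both multisets to coincide, so it suffices to show that non-isomorphic hyperedges cannot simultaneously match on vertex content and on their line-graph-neighbor embeddings.

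Next I would establish that the $K$-layer embeddings $h_v^{K}, h_{e}^{K}$ produced by \Cref{eq:vertex_update,eq:edge_update} are at least as discriminative as the $1$-WL coloring on the incidence structure of $H$. This is where the injectivity of the set functions $f^k,g^k,p^k,q^k$ (realizable via \cite{zaheer2017deep, murphy2018janossy}) enters: each convolutional layer injectively aggregates a node's (resp.\ hyperedge's) own color with the multiset of colors of its incident hyperedges and neighboring vertices, mirroring a $1$-WL refinement step on $L_H$ and on the star expansion of the dual. Together with the established $\cG$-invariance of the vertex and hyperedge representations, after enough rounds these embeddings refine the vertex and hyperedge orbits, so isomorphic vertices (resp.\ hyperedges) share an embedding and the colors inherit exactly the distinguishing power the preceding theorem attributes to \Cref{eq:separate}.

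The reduction itself is a rooting/marking argument. Given non-isomorphic $e_1,e_2$ I would mark each hyperedge (and, if needed, its constituent vertices) with a distinguished feature in $\mX,\mE$, singling out $e_i$ in an augmented hypergraph. Because $e_1\not\cong e_2$, no permutation $\pi_1\in\mathbb{S}_n$, $\pi_2\in\mathbb{S}_m$ carries the marked $e_1$ to the marked $e_2$, so the augmented hypergraphs are non-isomorphic; by the theorem of \cite{tyshkevich1996line} this is witnessed by the family of stars together with the line hypergraph, hence by the $1$-WL test on $L_H$ or the dual's star expansion. The preceding theorem then supplies parameters $\Theta$ under which the convolution assigns the marked hyperedge distinct embeddings, and injectivity of $\phi,\rho$ propagates the difference to $\Gamma(e_1,\cdot)\neq\Gamma(e_2,\cdot)$.

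The main obstacle I anticipate is the \emph{locality} gap: \Cref{eq:unobserved_struct} re-aggregates only the immediate line-graph neighborhood of $e$, whereas hyperedge isomorphism is a global property of $H$. The crux is to argue that the $K$ rounds of message passing have already folded the relevant global structure into $h_v^{K}, h_{e}^{K}$, so that a single local read-out at $e$ recovers the full rooted isomorphism type --- the line-graph-and-stars analogue of the standard fact that re-reading a node's neighborhood of converged $1$-WL colors separates non-isomorphic rooted graphs. Making this transfer precise, so that the WL-distinguishability hypothesis carries from whole hypergraphs down to the rooted hyperedge and $K$ is large enough relative to the diameter of $L_H$, is where the real work lies.
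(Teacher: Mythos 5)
Your proposal has a genuine gap at the step where you pass from ``the marked hypergraphs are non-isomorphic'' to ``the $1$-WL test distinguishes their line graphs or dual star expansions.'' The theorem of \cite{tyshkevich1996line} characterizes hypergraph isomorphism via line hypergraphs and families of stars; it says nothing about what $1$-WL can detect, and $1$-WL is strictly weaker than isomorphism testing. The paper's theorem is explicitly conditional on WL success, and the corollary silently inherits that hypothesis --- its stated proof is just ``direct consequence of the above theorem, \Cref{eq:unobserved_struct} and above property of hyperedges.'' Your argument instead promises unconditional separation of all non-isomorphic hyperedges, which no architecture of this type can deliver. Concretely, take $H$ to be the $6$-cycle viewed as a $2$-uniform hypergraph, $e_1=\{v_1,v_3\}$ and $e_2=\{v_1,v_4\}$: no automorphism carries $e_1$ to $e_2$ (automorphisms preserve distance), yet $1$-WL colors all vertices identically and all hyperedges identically, so the converged embeddings $h_v^K$ and $h_e^K$ are constant and \Cref{eq:unobserved_struct} assigns $e_1$ and $e_2$ the same representation for every choice of $\Theta$ --- both the constituent-vertex multisets and the incident-hyperedge multisets coincide. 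So the unconditional statement you set out to prove is false, and the WL-distinguishability hypothesis cannot be discharged by Tyshkevich's theorem.

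The marking argument has a second, independent problem: it changes the input. $\Gamma(e_i,H;\Theta)$ is evaluated on the fixed, unmarked $H$, using embeddings $h_v^K,h_e^K$ computed once from the actual features $\mX,\mE$; distinguishing two feature-augmented hypergraphs $H_1,H_2$ via the graph-level read-out \Cref{eq:separate} says nothing about $\Gamma(e_1,H)\neq\Gamma(e_2,H)$, because the network never sees the marks. Closing that transfer is exactly the ``locality gap'' you flag in your last paragraph and leave open --- but it is not a technicality to be deferred; by the counterexample above it cannot be closed in general. The paper's own route avoids all of this: parts 1 and 2 of the theorem's proof show that, with injective set functions, the final-layer colors refine the $1$-WL colors on the line graph and on the dual's star expansion, and the injective $\phi,\rho$ in \Cref{eq:unobserved_struct} then separate any two hyperedges whose WL color profiles (vertex colors of the constituents together with colors of incident hyperedges) differ, while the hyperedge-representation property guarantees isomorphic hyperedges collide. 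Read with the inherited WL hypothesis, the corollary reduces to this injectivity bookkeeping; your first paragraph (injectivity of $\phi,\rho$ forcing the two multisets to agree) and your second paragraph (GIN-style refinement) are the correct ingredients, and the proof should stop there rather than attempt the unconditional strengthening.
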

%
%
%
\begin{remark}[Separate Exchangeability]
The representation of a hypergraph $H$ learnt using the function $\Gamma: \Sigma_{n,m} \to \R^d$ (via \Cref{eq:separate}) preserves the separate exchangeability of the incidence structure $\mI$ of the hypergraph.
\end{remark}
%
%
%

We now describe the learning procedures for the two tasks, namely variable size hyperedge classification and variable size hyperedge expansion.

\subsection{Hyperedge Classification}
For a hypergraph $H$, let $E'$ denote the partially observed hyperedge set in our data corresponding to the true hyperedge set $E$.
The goal here is to learn a classifier $r : \R^d \to \R$  over the representations of hyperedges (obtained using \Cref{eq:unobserved_struct})  {\em s.t}  $\,\sigma(r( \Gamma(\{v_i, v_2, \ldots, v_M\}, H)))$ is used to classify if an unobserved hyperedge $e = \{v_1, v_2, \ldots, v_M \}$ exists i.e. $e \not\in E'$ but $e \in E$ where all $v_i \in V$ for $i \in \{1,2,\ldots,M\}$, and $\sigma$ is the logistic sigmoid.

Now, for the given hypergraph $H$, let $\mY^{H} \in \{0,1\}^{|P^\star(V) \setminus \emptyset|}$ be the target {\em random variables} associated with the vertex power set of the graph. 
Let $\cY^{H} \in \{0,1\}^{|P^\star(V) \setminus \emptyset|} $ be the corresponding true values attached to the vertex subsets in the power set, such that $\cY^{H}_e = 1$ iff $e \in E$.
We then model the joint distribution of the hyperedges in the hypergraphs by making a mean field assumption as:
\begin{equation}
\small
         P(H) = \prod_{e \in P^\star(V) \setminus \emptyset} \text{Bernoulli}(\mY^{G}_{e} = \cY^{G}_{e}| r(\Gamma(e, H);\Theta))
\end{equation}

Subsequently, to learn the model parameters $\Theta$ - we make a closed world assumption and treat only the observed hyperedges in $E'$ as positive and all other edge as false and seek to maximize the log-likelihood.
\begin{equation}
    \begin{split}
       \Theta = \argmax_{\Theta}\sum\limits_{e \in E'}\log p(\mY^{H}_{e} =1|r(\Gamma(e, H));\Theta) + \\
       \sum_{e \in P^\star(V)\setminus \{E', \emptyset\}}\log p(\mY^{H}_{e}=0|r(\Gamma(e, H));\Theta) 
    \end{split}
\end{equation}

Since the size of vertex power set $(2^{|V|})$, grows  exponentially with the number of vertices, it is computationally intractable to use all negative hyperedges in the training procedure. 
Our training procedure, hence employs a negative sampling procedure (in practice, we use 5 distinct negative samples for every hyperedge in every epoch) combined with a cross entropy loss function, to learn the  model parameters via back-propagation.
This framework can trivially be extended to perform multi class classification on variable sized hyperedges.

\subsection{Hyperedge Completion}
The set expansion task introduced in \cite{zaheer2017deep} makes the infinite de-Finetti assumption i.e. the elements of the set are {\em i.i.d}.
When learning over finite graphs and hypergraphs, this assumption is no longer valid - since the data is relational - i.e. a finite de-Finetti \cite{diaconis1977finite} assumption is required.
Additionally, the partial exchangeability of the structures (adjacency matrix/ incidence matrix) \cite{aldous1981representations} have to be factored in as well.

This raises multiple concerns: (1) computing mutual information of a partial vertex set with all other disjoint vertex subsets in the power set is computationally intractable; (2) to learn a model in the finite de-Finetti setting, we need to consider all possible permutations for a vertex subset.
For example, under the conditions of  finite exchangeability, the point-wise mutual information between two random variables $X, Y$ - where both are disjoint elements of the vertex power set (or hyperedges) i.e. $X, Y \in P^\star(V) \setminus \emptyset, \, X \cap Y = \emptyset$  is given by:
\begin{equation}
    \label{eq:mutual_info}
    s(X|Y) =  \log p(X \cup Y|\, \alpha)  - \log p(X\, |\,\alpha)p(Y|\,\alpha)
\end{equation}
where $\alpha$ is a prior and each of $p(X|\alpha), p(Y|\alpha), P(X \cup Y | \alpha)$ cannot be factorized any further i.e.
\begin{equation}
    \label{eq:prob_factorization}
    p(X|\alpha) = \frac{1}{|X|!}{}\sum_{\pi \in \Pi_{X}} \log p(v_{\pi(1)}, v_{\pi(2)}, \ldots , v_{\pi(|X|)}\, | \, \alpha)
\end{equation}
where $v_i \in X, i \in \{1,2,\ldots,|X|\}$ and $\Pi_X$ denotes the set of all possible permutations of the elements of $X$.
The inability to factorize \Cref{eq:prob_factorization} any further, leaves no room for any computational benefits by a strategic addition of vertices - one at a time (i.e. no reuse of computations, whatsoever).

As a solution to this computationally expensive problem, we propose a GAN framework \cite{goodfellow2014generative} to learn a probability distribution over the vertex power set, conditioned on a partially observed hyperedge, without sacrificing on the underlying objective of maximizing  point-wise mutual information between $X, Y$ (\Cref{eq:mutual_info}).
We describe the working of the generator and the discriminator of the GAN, with the help of a single example below.

Let $\overline{e}$ denote a partially observed hyperedge and $\Gamma(\overline{e},G)$ denote the representation of the partially observed hyperedge obtained via \Cref{eq:unobserved_struct}.
Let $V_K, \overline{V_{K}}$ denote the true and predicted vertices respectively to complete the partial hyperedge $\overline{e}$, where $V_K, \overline{V_K} \subseteq V \setminus \{\overline{e}\}$.

\subsubsection{Generator($G^\star$)}
The goal of the generator is to accurately predict $\overline{V_K}$ as $V_K$. 
We solve this using a two-fold strategy - first predict the number of elements $K$, missing in the partially observed hyperedge $\overline{e}$ and then jointly select $K$ vertices from $V \setminus \overline{e}$.
Ideally, the selection of the best $K$ vertices should be performed over all vertex subsets of size $K$ (where vertices are sampled from $V \setminus \overline{e}$ without replacement).
However, this is computationally intractable even for small values e.g $K=2,3$ for large graphs with millions of nodes.

We predict the number of elements missing in a hyperedge, $K$, using a function $a_1 : \R^d \to \mathbb{N}$ over the representation of the partial hyperedge, $\Gamma(\overline{e},G)$.
To address the problem of jointly selecting a set of $k$ vertices without sacrificing on computational tractability, we seek to employ a variant of the Top-K problem often used in computing literature.

The standard top-K operation can be adapted to vertices as: given a set of vertices of a graph $\{v_1, v_2, \cdots v_n\} = V \setminus \{\overline{e}\}$, to return a vector $A=\left[A_{1}, \ldots, A_{n}\right]^{\top}$
such that
$$
A_{i}=\left\{\begin{array}{ll}
1, & \text { if } v_{i} \text { is a top- } K \text { element in } V \setminus \overline{e} \\
0, & \text { otherwise. }
\end{array}\right.
$$
However a standard top-K procedure, which operates by sampling vertices (from the vertex set - a categorical distribution) is discrete and hence not differentiable.
%
To alleviate the issue of differentiability, Gumbel softmax \cite{jang2016categorical, maddison2016concrete} could be employed to provide a differentiable approximation to sampling discrete data. 
However, explicit top-K Gumbel sampling (computing likelihood for all possible sets of size $k$ over the complete domain) is computationally prohibitive and hence finds limited applications in hypergraphs with a large number of nodes and hyperedges.

In this work, we sacrifice on differentiability and focus on scalability.
We limit the vertex pool (which can complete the hyperedge) to only vertices in the two hop neighborhood (in the clique expansion $C_H$) of the vertices in the partial hyperedge.
For real world datasets, even the reduced vertex pool consists of a considerable number of vertices - and explicitly computing all sets of size $k$ is still prohibitive.
In such cases, we sample uniformly at random a large number of distinct vertex subsets of size $k$ from the reduced vertex pool, where $k$ is the size predicted by the generator.
In practice, the large number is typically min($\binom{P}{k}$, 100,000), where $P$ is the number of vertices in the reduced vertex pool.
Subsequently, we compute the inner product of the representations of these subsets (computed using \Cref{eq:unobserved_struct}) with the  representation of the partially observed hyperedge.
We then use a simple Top-1 to select the set of size $k$ which maximizes the inner product.

\subsubsection{Discriminator($D^\star$)}
The goal of the discriminator is to distinguish the true, but unobserved hyperedges from the others.
To do this, we obtain representations of $\Gamma(\overline{e},G), \Gamma(V_K,G), \Gamma(\overline{e}\cup V_K,G)$ (and similarly for the predicted $\overline{V_{K}}$ using the generator $G^\star$) and employ the discriminator in the same vein as \Cref{eq:mutual_info}.
As a surrogate for the log-probablities, we learn a function $g: \R^d \to \R^d$ over the representations of $\Gamma(\overline{e},G), \Gamma(V_K,G), \Gamma(\overline{e}\cup V_K,G)$ (log-probabilities in higher dimensional space).
Following this, we apply a function $f:\R^d \to \R^{+} \cup \{0\}$, as a surrogate for the mutual information computation.
The equation of discriminator can then be listed as:

\begin{equation}
\small
\begin{split}
    \label{eq:discriminator}
    D^\star(V_K | \overline{e}, H) = \sigma(f(g(\Gamma(\overline{e}\cup V_K,H)) \\
    - g(\Gamma(\overline{e},H)) - g(\Gamma(V_K,H))))
\end{split}
\end{equation}


and correspondingly for $D^\star(G^\star(\overline{V_{K}} | \overline{e}, G))$, where $\sigma$ is the logistic sigmoid.

Our training procedure for the GAN, over the hypergraph $H$, can then be summarized as follows. 
Let $V^\dagger$ denote the value function and let $E'$ denote a set of partial hyperedges and $E$ denote the corresponding set with all hyperedges completed. 
Let ${V_{K_{\overline{e}}}}, \overline{V_{K_{\overline{e}}}}$ denote the corresponding true and predicted vertices to complete the hyperedge.
The value function can then be written as:
\begin{equation}
\small
    \begin{split} \min_{G^\star}\max_{D^\star}V^\dagger(D^\star,G^\star) =
        \sum_{e' \in E'}\log D^\star({V_{K_{e'}}} | e', H) + \\
        \log (1 - D^\star(G^\star(e', H)))
    \end{split}
\end{equation}

In practice, the model parameters of the GAN are learnt using a cross entropy loss and back-propagation.
An MSE loss is employed to train the function $a_1$, the function that predicts the number of missing vertices in a hyperedge, using ground truth information about the number of missing vertices in the partial hyperedge.


\section{Results}
\label{sec:results}
We first briefly describe the datasets and then present our experimental results on the two hypergraph tasks.
\subsection{Datasets}
We use the publicly available hypergraph datasets from \cite{benson2018simplicial} to evaluate the proposed models against multiple baselines (described below).
We ignore the timestamps in the datasets and only use  unique hyperedges which contain greater than 1 vertex.
Moreover, none of the datasets have node or hyperedge features.
We summarize the dataset statistics in the Supplementary material.
We briefly describe the hypergraphs and the hyperedges in the different datasets below.

\begin{itemize}[noitemsep]
    \item Online tagging data (tags-math-sx; tagsask-ubuntu). In this dataset, nodes are tags (annotations) and a hyperedge is a set of tags for a question on online Stack Exchange forums.
    \item Online thread participation data (threads-math-sx; threads-ask-ubuntu): Nodes are users and a hyperedge is a set of users answering a question on a forum.
    \item Two drug networks from the National Drug Code Directory, namely (1) NDC-classes: Nodes are class labels and a hyperedge is the set of class labels applied to a drug (all applied at one time) and (2) NDC-substances: Nodes are substances and a hyperedge is the set of substances in a drug.
    \item US. Congress data (congress-bills): Nodes are members of Congress and a hyperedge is the set of members in a committee or cosponsoring a bill.
    \item Email networks (email-Enron; email-Eu): Nodes are email addresses and a hyperedge is a set consisting of all recipient addresses on an email along with the sender’s address.
    \item Contact networks (contact-high-school; contact-primary-school): Nodes are people and a hyperedge is a set of people in close proximity to each other.
    \item Drug use in the Drug Abuse Warning Network (DAWN): Nodes are drugs and a hyperedge is the set of drugs reportedly used by a patient before an emergency department visit.
\end{itemize}

\begin{table*}[ht!!!]
\vspace{-0.65in}
\begin{threeparttable}[b]
\caption{F1 scores for the hyperedge classification task (Higher is better).}
\label{tab:classification}
\begin{tabular}{l lllll}\toprule
                       & \multicolumn{1}{c}{Trivial} & \multicolumn{1}{c}{\begin{tabular}[c]{@{}c@{}}Clique Expansion-\\ GCN \\ (HGNN)\end{tabular}} & \multicolumn{1}{c}{\begin{tabular}[c]{@{}c@{}}Clique Expansion-\\ SAGE\end{tabular}} & \multicolumn{1}{c}{\begin{tabular}[c]{@{}c@{}}Star Expansion - \\ Heterogenous -\\ RGCN\end{tabular}} & \multicolumn{1}{c}{Ours} \\ \midrule
NDC-classes            & 0.286                       & 0.614(0.005)                                                                                      & 0.657(0.020)                                                                         & 0.676(0.049)                                                                                          & \textBF{0.768(0.004)}            \\
NDC-substances         & 0.286                       & 0.421(0.014)                                                                                      & 0.479(0.007)                                                                         & \textBF{0.525(0.006)}                                                                                        &     \textBF{0.512(0.032)}                     \\
DAWN                   & 0.286                       &      0.624(0.010)                                                                                             & 0.664(0.006)                                                                         &         0.634(0.003)                                                                                              &              \textBF{0.677(0.004)}            \\
contact-primary-school & 0.286                       & 0.645(0.031)                                                                                      & 0.681(0.014)                                                                         & 0.669(0.012)                                                                                          & \textBF{0.716(0.034)}             \\
contact-high-school    & 0.286                       & \textBF{0.759(0.030)}                                                                                      & 0.724(0.009)                                                                         & 0.739(0.012)                                                                                          & \textBF{0.786(0.033)}            \\
tags-math-sx           & 0.286                       &   0.599(0.009)    &    \textBF{0.635(0.003)}      &    0.572(0.003)                  &    \textBF{0.642(0.006)}                      \\
tags-ask-ubuntu        & 0.286                       &     0.545(0.005)                                                                                              & 0.597(0.007)                                                                         &      0.545(0.006)                                                                                                 &   \textBF{0.605(0.002)}                       \\
threads-math-sx        & 0.286                       &     0.453(0.017)                     &   0.553(0.012)               &       0.487(0.006)                 &           \textBF{0.586(0.002)}               \\
threads-ask-ubuntu     & 0.286                       & 0.425(0.007)                                                                                      & \textBF{0.512(0.007)}                                                                         & 0.464(0.010)                                                                                          & 0.488(0.012)             \\
email-Enron            & 0.286                       & 0.618(0.032)                                                                                      & 0.594(0.046)                                                                         & 0.599(0.040)                                                                                          & \textBF{0.685(0.016)}             \\
email-EU               & 0.286                       & 0.664(0.003)                                                                                      & 0.651(0.019)                                                                         & 0.661(0.006)                                                                                          &      \textBF{0.687(0.002)}                    \\
congress-bills         & 0.286                       & 0.412(0.003)                                                                                      & 0.530(0.055)                                                                         &           0.544(0.004)                                                                                            &       \textBF{0.566(0.011)}                    \\ 
\bottomrule
\end{tabular}
\begin{tablenotes}
\small
\item [1] A 5-fold cross validation procedure is used - numbers outside the parenthesis are the mean values and the standard deviation is specified within the parenthesis
\item [2] Bold values show maximum empirical average, and multiple bolds happen when its standard deviation overlaps with another average.
\end{tablenotes}
\end{threeparttable}
\vspace{-0.2in}
\end{table*}

\begin{table}[t!!!]
\resizebox{0.5\textwidth}{!}{
\begin{threeparttable}[b]
\caption{Normalized Set Difference scores for the hyperedge expansion task (lower is better)}
\label{tab:expansion}

\begin{tabular}{l c@{\hspace{4pt}}c@{\hspace{4pt}}c}\toprule
                       & \multicolumn{1}{c}{Simple} & \multicolumn{1}{c}{Recursive} & \multicolumn{1}{c}{Ours} \\ \midrule
NDC-classes            & 1.207(0.073)                      & 1.163(0.015)                         &      \textBF{1.107(0.007)}                    \\
NDC-substances         &   1.167(0.000)                         &       \textBF{1.161(0.009)}                       &       \textBF{1.153(0.004)}                   \\
DAWN                   &    1.213(0.006)                        &   1.197(0.022)                            &   \textBF{1.088(0.018)}                       \\
contact-primary-school & 0.983(0.006)                      & 0.986(0.001) &            \textBF{0.970(0.005)}              \\
contact-high-school    & \textBF{0.990(0.014)}                      & 1.000(0.000)                       &   \textBF{0.989(0.001)}                        \\
tags-math-sx           &    1.012(0.025)                        &     1.003(0.014)                          &      \textBF{0.982(0.011)}                    \\
tags-ask-ubuntu        &      1.008(0.003)                      &     1.005(0.003)                          &      \textBF{0.972(0.001)}                    \\
threads-ask-ubuntu     & 0.999(0.000)                      &     0.999(0.000)                          & \textBF{0.981(0.003)}                     \\
email-Enron            & 1.152(0.045)                      & 1.182(0.015)                         &    \textBF{1.117(0.049)}                      \\
email-EU               &      1.199(0.002)                      &   1.224(0.010)                            &         \textBF{1.116(0.013)}                 \\
congress-bills         &      1.186(0.004)                      &       1.189(0.001)                        &    \textBF{1.107(0.004)}                    \\ 
\bottomrule
\end{tabular}
\begin{tablenotes}
\item [1] A 5-fold cross validation procedure is used - numbers outside the parenthesis are the mean values and the standard deviation is specified within the parenthesis
\item [2] Bold values show minimum empirical average, and multiple bolds happen when its standard deviation overlaps with another average.
\end{tablenotes}
\end{threeparttable}
}
\vspace{-0.5in}
\end{table}
%
\subsection{Experimental Results}
\paragraph{Hyperedge Classification}
In this task, we compare our model against five baselines.
The first is a trivial predictor, which always predicts 1 for any hyperedge (in practice, we use 5 negative samples for every real hyperedge).
The second two baselines utilize a GCN \cite{kipf2016semi} or GraphSAGE \cite{hamilton2017inductive} on the clique expansion of the hypergraph.
GCN on the clique expansion on the hypergraph is the model proposed by \cite{feng2019hypergraph} as HGNN.
For the fourth baseline, we utilize the star expansion of the hypergraph - and employ a heterogeneous RGCN to learn the vertex, hyperedge embeddings.
In each of the baselines, unobserved hyperedge embeddings are  obtained by aggregating the representations of the vertices it contains, using a learnable set function \cite{zaheer2017deep, murphy2018janossy}.
We report F1 scores on the eight datasets in \Cref{tab:classification}.
More details about the experimental setup is presented in the Supplementary material.

\paragraph{Hyperedge Expansion}
Due to lack of prior work in hyperedge expansion, here we compare our strategy against two other baselines for hyperedge expansion (with the an identical GAN framework and setup to predict the number of missing vertices, albeit without computing joint representations of predicted vertices) : (1) Addition of Top-K vertices, considered independently of each other (2) Recursive addition of Top-1 vertex.
Since all the three models are able to accurately (close to 100\% accuracy) predict the number of missing elements, we introduce {\em normalized set difference}, as a statistic to compare the models.
Normalized Set difference (permutation invariant) is given by the number of insertion/ deletions/ modifications required to go from the predicted completion to the target completion divided by the number of missing elements in the target completion.
For example, let \{7,8,9\} be a set which we wish to expand. 
Then the normalized set difference between a predicted completion \{3,5,1,4\} and target completion \{1,2\} is computed as by (1+2)/2 = 1.5 (where there is 1 modification and 2 deletions).
It is clear to see that, a lower normalized set difference score is better and a score of 0 indicates a perfect set prediction.
Results are presented in \Cref{tab:expansion}.
\section{Discussion}
In the hyperedge classification task, from \Cref{tab:classification} it is clear to see that our model which with provable expressive properties performs better than the baselines, on most datasets.
All three non-trivial baselines appear to suffer from their inability to  capture higher order interactions between the vertices in a hyperedge.
Moreover, the loss in information by using a proxy graph - in the form of the clique expansion - also affects the performance of the SAGE and GCN baselines.
The SAGE baseline obtaining better F1 scores over GCN suggests that the self loop introduced between vertices in the clique expansion appears to hurt performance.
The lower scores of the star expansion models can be attributed to its inability in capturing vertex-vertex and hyperedge-hyperedge interactions.

For the hyperedge expansion task, from \Cref{tab:expansion} it is clear to see that adding vertices in a way which captures interactions amongst them performs better than adding vertices independently of each other or in a recursive manner.
The relatively weaker performance of adding vertices recursively, one at a time can be attributed to a poor choice of selection of the first vertex to be added (once an unlikely vertex is added, the sequence cannot be corrected).

\section{Conclusions}
\label{sec:conclusions}
In this work, we developed a hypergraph neural network to learn provably expressive representations of vertices, hyperedges and the complete hypergraph.
We  proposed frameworks for hyperedge classification and a novel hyperedge expansion task, evaluated performance on multiple real-world hypergraph datasets, and demonstrated consistent, significant improvement in accuracy, over state-of-the-art models.

\bibliography{main}
\bibliographystyle{plain}

\clearpage

\section{APPENDIX}
\subsection{Examples:}
\label{subsection:appendix}
Let $\cA$ denote the complete set of substances which are possible components in a prescription drug. Now, given a partial set of substances $X' \subset \cA$ part of a single drug, the hyperedge expansion entails  completing the set $X'$ as $X$ with a set of substances from $\cA$, (which were unobserved due to the data collection procedure for instance), with the set $X$ chosen {\em s.t.} 
$X = \argmax_{X' \subseteq X \subseteq \cA} p_{data}(X) - p_{data}(X')$.
On the other hand, an example of a hyperedge classification tasks involves determining whether a certain set of substances can form a valid drug or alternatively classifying the nature of a prescription drug.
From the above examples, it is clear to see that the hyperedge expansion and hyperedge classification necessitate the framework to jointly capture dependencies between all the elements of an input set (for instance, the associated outputs in these two tasks, requires us to capture all interactions between a set of substances, rather than just the pairwise interactions between a single substances and its neighbors computed independently - as in node classification) and hence are classed as higher order tasks.
Additionally, for the hyperedge expansion task, the associated output is a finite set and hence in addition to maximizing the interactions between the constituent elements it is also required to be permutation invariant.
For instance, in the expansion task, the training data $X \in \cX$ as well as the associated target variable $Y \in \cY$ to be predicted are both sets. 
The tasks are further compounded by the fact that the training data and the outputs are both relational i.e. the representation of a vertex/ hyperedge also depends on other sets (composition of other observed drugs) in the family of sets i.e. the data is {\em non i.i.d}.


\subsection{Proofs of Properties, Remarks and Theorems}
We restate the properties, remark and theorems for convenience and prove them.

\begin{property}[Vertex Representations]
The representation of a vertex $v \in V$ in a hypergraph $H$ learnt using $\Cref{eq:vertex_update}$ is a $\cG$-invariant representation $\Phi(v, V, E, \mX, \mE)$ where $\Phi:V \times \Sigma_{n,m} \to \R^d, d\geq 1$ such that  $\Phi(v, V, E, \mX, \mE)$ = $\Phi((\pi_1(v), \pi_1(V), \pi_2(\pi_1(E)), \pi_1(\mX), \pi_2(\pi_1(\mE)))$ $\forall \pi_1 \forall \pi_2$ where $ \pi_1 \in \mathbb{S}_n \, \text{and} \, \pi_2 \in \mathbb{S}_m$.  
Moreover, two vertices $v_1, v_2$ which belong to the same vertex equivalence class i.e. $v_1 \cong v_2$ obtain the same representation.
\end{property}
\begin{proof}
Part 1: Proof by contradiction.
Let $\pi, \pi' \in \mathbb{S}_n$ be two different vertex permutation actions and let  $\Phi(\pi(v), \pi(V), E, \pi(\mX), \mE)$ $\not=$ $\Phi(\pi'(v), \pi'(V), E, \pi'(\mX), \mE)$.
This implies that the same node gets different representations based on an ordering of the vertex set.
From $\cref{eq:vertex_update}$ it is clear to see that the set function $g^k$ ensures that the vertex representation is not impacted by the edge permutation action.
Now let $k=1$
Expanding $\cref{eq:vertex_update}$ for both vertex permutation actions and applying the cancellation law of groups, $h_v^1$ is independent of the permutation action.
Since $h_v^{0}$ is identical for both, it means the difference arises from the edge permutation action, which is not possible.
Now, we can show using induction, the contradiction holds for a certain $k, k \geq 2$, then it holds for $k+1$ as well.
Hence, $\Phi(\pi(v), \pi(V), E, \pi(\mX), \mE)$ $=$ $\Phi(\pi'(v), \pi'(V), E, \pi'(\mX), \mE)$

Part2: Proof by contradiction
Let $v_1, v_2' \in V$ be two isomorphic vertices and let  $\Phi(v_1, V, E, \mX, \mE)$ $\not=$ $\Phi(v_2, V, E, \mX, \mE)$
This implies $h_{v_1}^k \not= h_{v_2}^k \, \forall k \geq 0$
However, by the definition, the two vertices are isomorphic, i.e. they have the same initial node features (if available) i.e. $h_{v_1}^0 = h_{v_2}^0$ and they also posses an isomorphic neighborhood.
\Cref{eq:vertex_update} is  deterministic, hence the representations obtained by the vertices $v_1, v_2$ are also identical after 1 iteration i.e. $h_{v_1}^1 = h_{v_2}^1$ .
Now using induction we can show that, the representations for $h_{v_1}^{k}$ is the same as $h_{v_2}^{k}$ for any $k \geq 2$
Hence $\Phi(v_1, V, E, \mX, \mE)$ $=$ $\Phi(v_2, V, E, \mX, \mE)$ when $v_1 \cong v_2$
\end{proof}
\begin{property}[Hyperedge Representations]
The representation of a hyperedge $e \in E$ in a hypergraph $H$ learnt using $\Cref{eq:edge_update}$ is a $\cG$-invariant representation $\Phi(e, V, E, \mX, \mE)$ where $\Phi:P^\star(V) \times \Sigma_{n,m} \to \R^d, d\geq 1$ such that  $\Phi(e, V, E, \mX, \mE)$ = $\Phi((\pi_2(\pi_1(e)), \pi_1(V), \pi_2(\pi_1(E)), \pi_1(\mX), \pi_2(\pi_1(\mE)))$ $\forall \pi_1 \forall \pi_2$ where $ \pi_1 \in \mathbb{S}_n \, \text{and} \, \pi_2 \in \mathbb{S}_m$
Moreover, two hyperedges $e_1, e_2$ which belong to the same hyperedge equivalence class i.e. $e_1 \cong e_2$ obtain the same representation.
\end{property}
\begin{proof}
Proof is similar to the two part $\cG$-invariant vertex representation proof given above.
Replace the vertex permutation action with a joint vertex edge permutation action and similarly use the cancellation law of groups twice.
\end{proof}

\begin{theorem}[\cite{tyshkevich1996line}]
Let $H_1, H_2$ be hypergraphs without isolated vertices whose line hypergraphs ${LG}_{H_1}, {LG}_{H_2}$ are isomorphic. 
Then $H_{1} \cong H_{2}$ if and only if there exists a bijection $\beta: V {LG}_{H_{1}} \rightarrow V {LG}_{H_{2}}$ such that $\beta\left(S_{H_{1}}\right)=S_{H_{2}},$ where $S_{ H_{i}}$ is the family of stars of the hypergraph $H_{i}$
\end{theorem}

\begin{proof}
Theorem is a direct restatement of the theorem in the original work.
Please refer to \cite{tyshkevich1996line} for the proof.
\end{proof}
\begin{theorem}
Let $H_1$, $H_2$ be two non isomorphic hypergraphs with finite vertex and hyperedge sets and no isolated vertices.
If the Weisfeiler-Lehman test of isomorphism decides their line graphs $L_{H_1}, L_{H_2}$ or the star expansions of their duals  $H_1^\star, H_2^\star$ to be not isomorphic 
then there exists a function $\Gamma: \Sigma_{n,m} \to \R^d$ (via \Cref{eq:separate}) and parameters $\Theta$ that maps the hypergraphs $H_1, H_2$ to different representations.
\end{theorem}
\begin{proof}
Part 1: Proof by construction, for the line graph $L_H$.
Consider \Cref{eq:edge_update}.
By construction, make the set function $p$ as an injective function with a multiplier of a negligible value i.e. $\to 0$.
This implies, a hyperedge only receives information from its adjacent hyperedges.
Since we use injective set functions, following the proof of \cite{xu2018powerful} Lemma 2 and Theorem 3, by induction it is easy to see that if the 1-WL isomorphism test decides that the line graphs are non-isomorphic, the representations obtained by the hyperedges through the iterative message passing procedure are also different.

Part 2: Proof by construction, for the dual graph $H^\star$
Again, consider \Cref{eq:edge_update}.
By construction, associate a unique identifier with every node and hyperedge in the hypergraph.
Construct $p$ as an identity map, this implies, a hyperedge preserves information from which vertices it receives information as well.
Since the above $p$ is injective, again following the proof of \cite{xu2018powerful}  Lemma 2 and Theorem 3, by induction it is easy to see that if the 1-WL isomorphism test decides that the dual of a hypergraph are non-isomorphic, the representations obtained by the hyperedges through the iterative message passing procedure are also different.

Part 3: From part 1 and part 2 of the proof above, we see that if either the line graphs or the dual of the hypergraphs are distinguishable by the 1-WL isomorphism test as non-isomorphic then our proposed model is able to detect it as well.
From the property of vertex representations it also seen that isomorphic vertices obtain the same representation - hence preserving the family of stars representation as well.
Now consider \Cref{eq:separate}
Now, if the line hypergraphs $LG_{H_1}$ and $LG_{2}$ are distinguishable via the line graphs or the dual graphs then the representation obtained by hyperedge aggregations are different.
Correspondingly if the family of stars - does not preserve a bijection across the two hypergraphs, then the representation of the graphs are distinguishable using the vertex aggregation.
\end{proof}

\begin{corollary}
There exists a function  $\Gamma : P^\star (V) \times \Sigma_{n,m} \to \R^d $ (via \Cref{eq:unobserved_struct}) and parameters $\Theta$ that map two non-isomorphic hyperedges $e_1, e_2$ to different representations.
\end{corollary}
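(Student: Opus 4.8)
The plan is to specialize the argument used for the preceding hypergraph-level theorem to the hyperedge level, exploiting that \Cref{eq:unobserved_struct} is structurally a \emph{local} version of \Cref{eq:separate}: instead of aggregating over all of $V$ and $E$, it aggregates the final vertex embeddings $\{h_{v_i}^K : v_i \in e'\}$ through the injective set function $\phi$ and the final embeddings of the incident hyperedges $\{h_e^K : e \ni v_i\}$ through the injective multiset function $\rho$, concatenating the two via $\otimes$. First I would fix the base embeddings by invoking the Properties of vertex and hyperedge representations: by construction they are $\cG$-invariant and assign identical vectors to isomorphic vertices and to isomorphic hyperedges, so $h_v^K$ and $h_e^K$ faithfully label the local orbit structure surrounding $e'$.

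Next I would argue the contrapositive. Suppose $\Gamma(e_1, H; \Theta) = \Gamma(e_2, H; \Theta)$. Since the two halves of \Cref{eq:unobserved_struct} are concatenated, both the $\phi$-component and the $\rho$-component must agree separately. Injectivity of $\phi$ then forces the multiset $\{h_{v_i}^K : v_i \in e_1\}$ to equal $\{h_{v_i}^K : v_i \in e_2\}$, and injectivity of $\rho$ forces the incident-hyperedge multiset of $e_1$ to equal that of $e_2$. Here I would reuse the two constructions of the inner set function $p$ from the proof of the preceding theorem: tuning $p$ toward the identity map makes $h_e^K$ retain the dual (star) structure, while tuning $p$ toward a vanishing multiplier makes $h_e^K$ retain the line-graph adjacency. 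Under either choice, equality of both aggregated multisets yields a structure-preserving bijection between the vertices of $e_1$ and $e_2$ together with a bijection matching their incident stars.

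Finally I would close the loop by appealing to the hyperedge-local analogue of the star-family characterization of \cite{tyshkevich1996line}: a bijection that simultaneously matches the constituent vertex orbits and the family of incident stars is precisely a hyperedge isomorphism, so the two derived bijections certify $e_1 \cong e_2$, contradicting the hypothesis $e_1 \not\cong e_2$. Hence the constructed $\Gamma$ and $\Theta$ separate the two hyperedges. The main obstacle I expect is exactly this last step: showing that matching both aggregated multisets is not merely necessary but \emph{sufficient} for a hyperedge isomorphism. At the global level this completeness is supplied by the preceding theorem, so the real work is localizing that statement — verifying that restricting attention to the vertices of a single hyperedge and their incident stars still captures enough of the line-graph and dual structure that no pair of genuinely non-isomorphic hyperedges can collapse to the same pair of multisets.
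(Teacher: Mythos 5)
Your proposal runs the argument in the wrong direction, and the step you yourself flag as the ``main obstacle'' is a genuine, unfixable gap rather than a technicality. You argue the contrapositive: equal representations force (by injectivity of $\phi$ and $\rho$) equal multisets $\{h_{v}^K : v \in e_1\} = \{h_{v}^K : v \in e_2\}$ and equal incident-hyperedge multisets, and you then want to conclude that these equalities certify a hyperedge isomorphism $e_1 \cong e_2$ via a ``hyperedge-local analogue'' of the Tyshkevich star-family characterization. No such completeness statement is available, and it is false in general: the embeddings $h_v^K$, $h_e^K$ are produced by message passing, which the paper itself notes (citing Morris et al.\ and Xu et al.) is at most as powerful as 1-WL. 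Two genuinely non-isomorphic hyperedges whose local line-graph/dual neighborhoods are 1-WL-equivalent (the hypergraph analogue of, say, vertices in non-isomorphic regular graphs) will receive identical final-layer colorings, so equality of the aggregated multisets does \emph{not} yield the structure-preserving bijection you need. You also misattribute this completeness to the preceding theorem: that theorem is one-directional and conditional --- \emph{if} 1-WL distinguishes the line graphs $L_{H_1}, L_{H_2}$ or the duals $H_1^\star, H_2^\star$, \emph{then} some $\Gamma, \Theta$ separate the hypergraphs. It never asserts that non-isomorphic objects always separate, so it cannot supply the missing sufficiency at the global level, let alone the local one.

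The paper's proof is instead a one-line forward (soundness) argument, and the corollary must be read as inheriting the theorem's 1-WL hypothesis: by the theorem's two constructions of the inner set function $p$ (vanishing multiplier for the line graph; identity map for the dual), hyperedges whose local structures are 1-WL-distinguishable already receive different final-layer embeddings $h_e^K$, and isomorphic vertices/hyperedges receive equal ones by the stated invariance properties. Then \Cref{eq:unobserved_struct} preserves these differences: if the constituent-vertex multiset or the incident-hyperedge multiset differs between $e_1$ and $e_2$, injectivity of $\phi$ and $\rho$ together with concatenation forces $\Gamma(e_1, H; \Theta) \neq \Gamma(e_2, H; \Theta)$. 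That is, the corollary needs only ``different multisets $\Rightarrow$ different representations,'' which injectivity gives you for free; your attempt to prove the converse direction would, if it worked, contradict the known 1-WL ceiling of the architecture. Your opening observations --- that \Cref{eq:unobserved_struct} is a local version of \Cref{eq:separate} and that the invariance properties pin down the base embeddings --- are correct and match the paper's intent; the error is solely in trying to upgrade necessity of multiset equality to sufficiency for isomorphism.
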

\begin{proof}
Proof is a direct consequence of the above theorem, \cref{eq:unobserved_struct} and above property of hyperedges.
\end{proof}
\begin{remark}[Separate Exchangeability]
The representation of a hypergraph $H$ learnt using the function $\Gamma: \Sigma_{n,m} \to \R^d$ (via \Cref{eq:separate}) preserves the separate exchangeability of the incidence structure $\mI$ of the hypergraph.
\end{remark}
\begin{proof}
From \Cref{eq:separate}, it is clear that once the representations of the observed vertices and hyperedges are obtained, the vertex permutation actions don't affect the edge permutation and vice versa - i.e. the set functions $\phi, \rho$ act independently of each other.
From \Cref{eq:separate} and through the use of set functions, it is also clear that the representation of the hypergraph is invariant to permutations of both vertex and edge.
\end{proof}

\subsection{Dataset Statistics}
In \Cref{tab:dataset} we list the number of vertices and hyperedges for each of the datasets we have considered. 

\begin{table}[h!!!!]
\caption{Dataset Statistics}
\label{tab:dataset}
\begin{tabular}{l ll}\toprule
                       & \multicolumn{1}{c}{\# Vertices} & \multicolumn{1}{c}{\# Hyperedges} \\ \midrule
NDC-classes            & 1161                            & 679                               \\
NDC-substances         & 5556                            & 4916                              \\
contact-primary-school & 242                             & 4036                              \\
contact-high-school    & 327                             & 1870                              \\
threads-math-sx        & 201863                          & 177398                            \\
threads-ask-ubuntu     & 200974                          & 18785                             \\
email-Enron            & 148                             & 577                               \\
email-EU               & 1005                            & 10631         \\
\bottomrule
\end{tabular}
\end{table}

\subsection{Experimental Setup}
Our implementation is in PyTorch using Python 3.6.
For the hyperedge classification task, we used 5 negative samples for each positive sample.
For the hyperedge expansion task, the number of vertices to be added varied from 2 to 7.
The implementations for graph neural networks are done using the Deep Graph Library \cite{wang2019dgl}.
We used two convolutional layers for all the baselines as well as our model since it had the best performance in our tasks (we had tested with 2/3/4/5 convolutional layers).
For all the models, the hidden dimension for the convolutional layers, set functions was tuned from \{8,16,32,64\}.
Optimization is performed with the Adam Optimizer and the learning rate was tuned in \{0.1, 0.01, 0.001, 0.0001, 0.00001\}.
For the set functions we chose from \cite{zaheer2017deep} and \cite{murphy2018janossy}.
For more details refer to the code provided.

\end{document}